\documentclass{article}

\usepackage[utf8]{inputenc}
\usepackage[T1]{fontenc}
\usepackage{hyperref}
\usepackage{url}
\usepackage{booktabs}
\usepackage{amsfonts}
\usepackage{nicefrac}
\usepackage{microtype}
\usepackage{graphicx}
\usepackage{amsmath,amssymb,amsthm}
\usepackage{bm}
\usepackage{geometry}
\usepackage{natbib}
\usepackage{algorithm}
\usepackage{algorithmic}
\usepackage{xcolor}
\usepackage{enumitem}

\newtheorem{proposition}{Proposition}

\newtheorem{lemma}{Lemma}

\title{Knob: A Physics-Inspired Gating Interface for \\ Interpretable and Controllable Neural Dynamics}

\author{
  Siyu Jiang\textsuperscript{1}, Sanshuai Cui\textsuperscript{1*}, Hui Zeng\textsuperscript{2}\\
  \textsuperscript{1}City University of Macau, Macau, China\\
  \textsuperscript{2}Southwest University of Science and Technology, Mianyang, China\\
  \texttt{\{D23090100052; sanshuaicui\}@cityu.edu.mo; zengh5@mail2.sysu.edu.cn}\\
  \textsuperscript{*} Corresponding author.
}

\date{}

\begin{document}
\maketitle

\begin{abstract}
Existing neural network calibration methods often treat calibration as a static, post-hoc optimization task. However, this neglects the dynamic and temporal nature of real-world inference. Moreover, existing methods do not provide an intuitive interface enabling human operators to dynamically adjust model behavior under shifting conditions. In this work, we propose \textbf{Knob}, a framework that connects deep learning with classical control theory by mapping neural gating dynamics to a second-order mechanical system. By establishing correspondences between physical parameters---damping ratio ($\zeta$) and natural frequency ($\omega_n$)---and neural gating, we create a tunable ``safety valve.'' The core mechanism employs a logit-level convex fusion, functioning as an input-adaptive temperature scaling. It tends to reduce model confidence particularly when model branches produce conflicting predictions. Furthermore, by imposing second-order dynamics (Knob-ODE), we enable a \textit{dual-mode} inference: standard i.i.d. processing for static tasks, and state-preserving processing for continuous streams. Our framework allows operators to tune ``stability'' and ``sensitivity'' through familiar physical analogues. \textbf{This paper presents an exploratory architectural interface}; we focus on demonstrating the concept and validating its control-theoretic properties rather than claiming state-of-the-art calibration performance. Experiments on CIFAR-10-C validate the calibration mechanism and demonstrate that, in Continuous Mode, the gate responses are consistent with standard second-order control signatures (step settling and low-pass attenuation), paving the way for predictable human-in-the-loop tuning.
\end{abstract}

\section{Introduction}
\label{sec:intro}

Deep neural networks have achieved remarkable success across various domains. However, their ``black box'' nature remains problematic, especially regarding their ability to reliably estimate uncertainty. In safety-critical applications like autonomous driving or medical diagnosis, a model's ability to express uncertainty---and correct for overconfidence---is as crucial as its accuracy. The well-documented robustness--calibration paradox \citep{guo2017calibration,ovadia2019uncertainty}, where models become less calibrated as they become more robust, is a critical challenge \citep{Roschewitz2025ShiftCalSurvey}. Models that cannot faithfully represent their uncertainty under novel conditions are inherently unreliable.

Existing solutions to this problem typically fall into two categories: post-hoc methods and training-time regularization. Post-hoc techniques like Temperature Scaling (TS) \citep{guo2017calibration} adjust confidence scores using a validation set, effectively fitting a single scalar to a dataset that may not resemble the test environment. Training-time methods, such as label smoothing \citep{liu2022mbls} or Mixup \citep{Noh2023RankMixup}, attempt to bake calibration into the weights. However, both approaches suffer from a fundamental limitation: \textbf{opacity}. They produce static solutions that are difficult for an end-user to interpret or adjust. If a deployed model behaves erratically, the operator has no intuitive lever to pull; the only recourse is often to retrain or recalibrate on new data, which is not always feasible.

We propose a different paradigm: viewing neural network inference not as a static mapping, but as a \textbf{dynamic system with a controllable ``Volume Knob.''} Just as an audio engineer adjusts compressors to manage volume spikes or a driver tunes suspension settings for different terrains, an AI operator should have intuitive means to control a model's ``conservativeness'' or ``responsiveness.'' We argue that the language of classical mechanics---damping, stiffness, and inertia---provides the ideal vocabulary for this interface.

A more promising direction lies in architectural design. The superior calibration of architectures like Vision Transformers (ViT) \citep{minderer2021revisiting} suggests that calibration can be an intrinsic property of the model itself. However, such properties often emerge without a clear theoretical framework or interpretable control mechanisms. Drawing inspiration from the principled stability criteria in classical control theory \citep{franklin2019feedback}, we propose embedding the dynamics of physical systems directly into neural network architectures.

Our primary contribution is \textbf{Knob}, a physics-grounded architectural framework that establishes a formal, differentiable mapping between the parameters of a classical second-order damped mechanical system (e.g., its damping ratio $\zeta$ and natural frequency $\omega_n$) and the inference-time dynamics of a neural network. This mapping is enabled by a novel, physics-inspired neural layer that governs a gate's response with interpretable control over its stability and speed, while Tustin discretization ensures step-size-independent stability. This gate performs a logit-level convex fusion, which we interpret as an input-adaptive temperature scaling mechanism that tends to suppress overconfidence. We instantiate this framework in a family of efficient methods (Knob) and illustrate our claims with a cohesive theory-evidence loop: the confidence moderation properties of convex fusion are examined in E-1, the gate's learned behavior in E-2, and the second-order dynamics in E-3.

\textbf{Scope of this work.} This paper is exploratory in nature: we focus on introducing the \textit{architectural interface} and demonstrating that the control-theoretic properties (damping, frequency response) are indeed functional. We do not claim state-of-the-art calibration; rather, we aim to open a new design axis where model behavior can be adjusted via physically meaningful parameters.

\section{Related Work}
\label{sec:related}

The challenge of ensuring that neural network confidence scores accurately reflect true prediction correctness, particularly under distribution shift, has spurred extensive research. The field has predominantly advanced along two main trajectories: post-hoc calibration and training-time regularization.

\paragraph{Post-hoc Calibration.}
Post-hoc methods, such as the seminal Temperature Scaling (TS) \citep{guo2017calibration}, its input-adaptive extensions \citep{mozafari2018ats,joy2023sats}, and more complex density-based approaches like Dirichlet calibration \citep{kull2019dirichlet} or Bayesian Binning into Quantiles (BBQ) \citep{naeinibbq}, adjust the outputs of a pre-trained model. Although computationally inexpensive, their dependency on representative validation sets renders them vulnerable to performance deterioration under non-stationary real-world data streams \citep{ovadia2019uncertainty}.

\paragraph{Training-time Regularization.}
Training-time regularization techniques, including label smoothing \citep{liu2022mbls} and data augmentation strategies like Mixup \citep{Noh2023RankMixup}, aim to build in calibration from the outset by discouraging overconfident predictions during optimization. These methods can improve in-distribution calibration but often require careful hyperparameter tuning and can increase training costs, limiting their agility.

\paragraph{Architectural Design for Calibration.}
Our work carves a third path: architectural design for calibration. While some studies have shown that certain architectures like Vision Transformers (ViT) inherently offer better calibration \citep{minderer2021revisiting}, many such findings are empirical observations rather than the result of principled design. We distinguish our approach by explicitly importing concepts from classical control theory \citep{franklin2019feedback}. Instead of relying on emergent properties, we engineer the network's gating dynamics to follow a prescribed second-order damped system, providing an interpretable, physics-grounded mechanism for confidence modulation that is intrinsic to the model's structure.

\paragraph{Physics-Informed Deep Learning.}
Integrating physical priors into deep learning typically involves solving partial differential equations (PDEs) or modeling physical systems. Neural ODEs \citep{chen2018neuralode} bridged the gap between residual networks and dynamical systems. However, most prior work uses AI to \textit{solve} physics. In contrast, we use physics to \textit{constrain} AI. By imposing second-order damping dynamics on the gating mechanism, we leverage the stability guarantees of classical mechanics to regulate the volatile confidence estimates of deep networks.

\paragraph{Second-order Neural ODEs and Momentum Dynamics.}
Several works explore second-order dynamics in neural ODEs (e.g., heavy-ball style formulations) primarily to improve training efficiency or feature evolution. Knob differs in intent: we use a second-order prior to regulate \emph{inference-time gating} and expose $(\zeta,\omega_n)$ as user-interpretable control surfaces for run-time behavior shaping, rather than optimizing the training trajectory.

\paragraph{Oscillatory and Damping-based Priors.}
Damped harmonic oscillator models are widely used in control theory to describe stable responses under noise and perturbations. In our setting, we adopt the same control primitives (damping ratio and bandwidth) but apply them to a neural gate, allowing us to probe step and frequency responses (E-3) as diagnostic signatures of the embedded prior.

\paragraph{Comparison with MoE and Ensembling.}
Our approach shares similarities with Mixture of Experts (MoE) \citep{shazeer2017moe} and Deep Ensembles \citep{lakshminarayanan2017simple}. However, standard MoE gates are typically trained to maximize capacity or sparsity and are theoretically unconstrained. Deep Ensembles rely on averaging independent models, which is computationally expensive and lacks run-time adjustability. Knob differs by explicitly constraining the gating signal to follow a \textit{differential equation}, ensuring that the mixing weights evolve smoothly and predictably according to user-intelligible parameters ($\zeta, \omega_n$).

\section{Method: The Knob Framework}
\label{sec:method}

This section details the Knob framework, proceeding from its high-level structure to its theoretical underpinnings and dynamic properties.

\subsection{Architectural Overview and Core Terminology}
\label{sec:method-overview}

The core of the Knob framework is a \textbf{logit-level convex fusion} mechanism that operates on two parallel logit streams, $\bm z_{\mathrm{static}}$ and $\bm z_{\mathrm{dyn}}$, supplied by a dual-stream backbone (Fig.~\ref{fig:framework}). A sample-wise, learned scalar gate $g(x) \in [0,1]$ computes a convex combination of these two streams:
\begin{equation}
\label{eq:method-convex}
\bm z_{\mathrm{fuse}} \;=\; g(x)\,\bm z_{\mathrm{dyn}} \;+\; \bigl(1-g(x)\bigr)\,\bm z_{\mathrm{static}}.
\end{equation}
This architecture ensures that the fused logits $\bm z_{\mathrm{fuse}}$ lie on the line segment connecting the two source logits, a property that intrinsically limits overconfidence, as we will formalize in \S\ref{sec:method-properties}. To provide stable and interpretable control over the gate's behavior, its dynamics are governed by a differentiable second-order system, detailed in \S\ref{sec:method-dynamics}.

\begin{figure}[t]
\centering
  \includegraphics[width=0.85\linewidth]{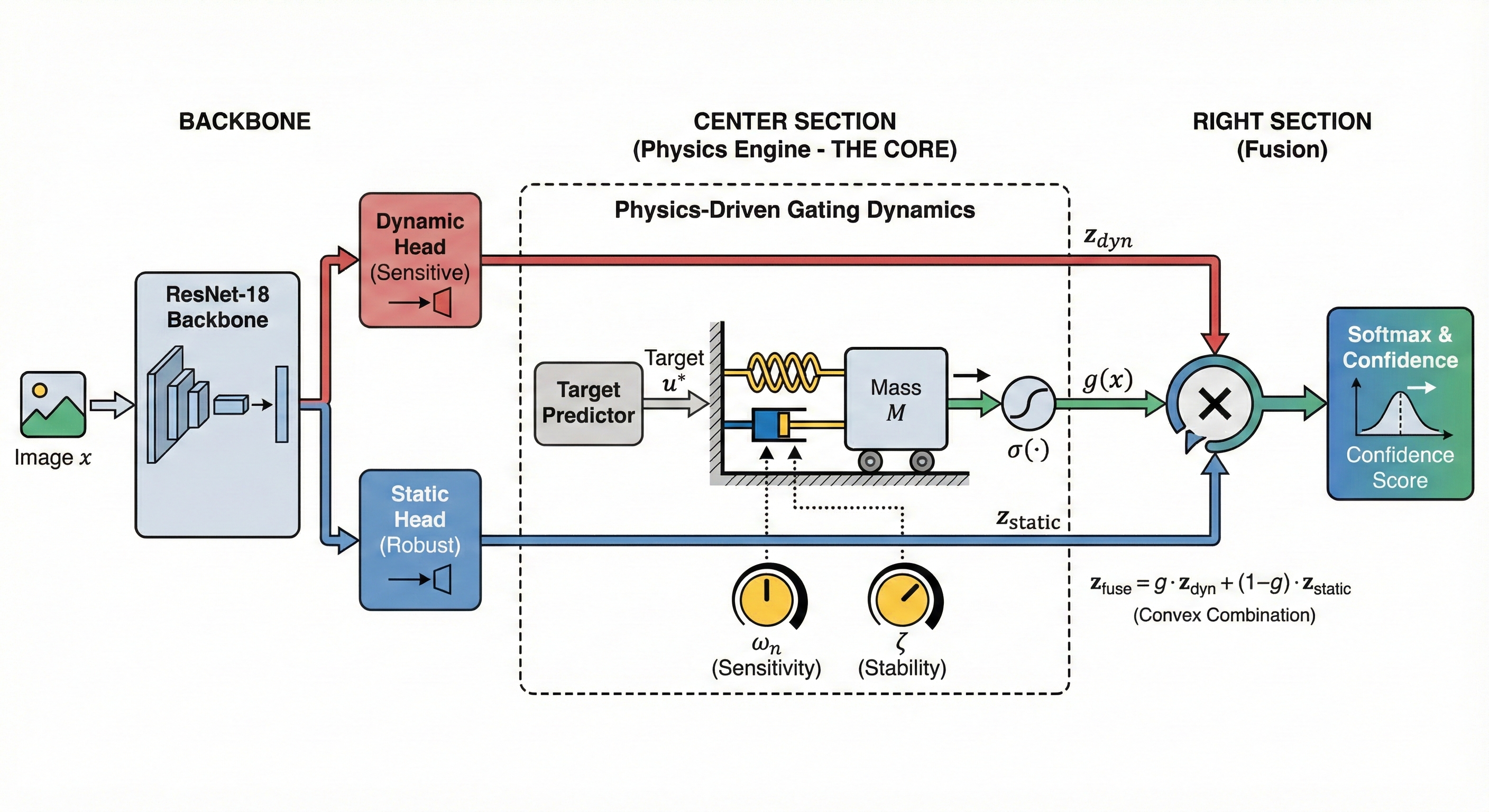}
  \caption{\textbf{The Knob framework as a physics-inspired interface.} 
  \textit{Left (Dual-Stream Backbone):} A shared encoder feeds two lightweight projection heads, producing a robust ``Static'' branch and a more sensitive ``Dynamic'' branch, yielding complementary logit vectors $\bm{z}_{\mathrm{static}}$ and $\bm{z}_{\mathrm{dyn}}$.
  \textit{Center (Physics Engine):} The gating mechanism is modeled as a \textbf{mass-spring-damper} system (normalized to unit mass). A network-predicted reference input $u^*(x)$ drives the system; its response is governed by two interpretable control parameters---\textbf{Natural Frequency} ($\omega_n$), controlling sensitivity/bandwidth, and \textbf{Damping Ratio} ($\zeta$), controlling stability/conservativeness---which act as tunable ``knobs'' for the operator.
  \textit{Right (Convex Fusion):} The gate value $g(x) \in [0,1]$, determined by the mass displacement, performs a convex combination of two logit branches, naturally limiting model overconfidence.}
  \label{fig:framework}
\end{figure}

For clarity, we systematically define the family of methods evaluated in this paper:
\begin{itemize}
    \item \textbf{Static-only (Static):} Uses only the static branch logits as baseline.
    \item \textbf{Channel-attention fusion (Attention):} A non-convex counterpart using standard attention.
    \item \textbf{Input-adaptive convex gate (Knob-IA):} The core proposal using the learned gate $g(x)$ for fusion.
    \item \textbf{EMA-smoothed gate (ODE-Lite):} A lightweight version using first-order Exponential Moving Average.
    \item \textbf{Second-order damped gate (Knob-ODE):} The full physics-inspired dynamics.
\end{itemize}

\subsection{The Second-Order Damped Gate: Physical Control Parameters}
\label{sec:method-dynamics}

To prevent the learned gate $g(x)$ from introducing instability, we constrain its dynamics by modeling its evolution as a classical second-order damped system. This approach injects a strong, interpretable physical prior into the model's inference-time behavior, drawing inspiration from neural differential equations \citep{chen2018neuralode}.

\paragraph{From physical equations to a state-space representation.}
Instead of directly modeling the gate value $g \in [0,1]$, we impose dynamics on a latent variable $u$ and obtain the gate via a sigmoid mapping, $g=\sigma(u)$. The evolution of $u$ follows a standard second-order ODE:
\begin{equation}
\ddot{u} + 2\zeta\omega_n \dot{u} + \omega_n^2 u \;=\; \omega_n^2\,u^*(x),
\end{equation}
where $u^*(x)$ is a network-predicted ``target displacement,'' and $\zeta$ (damping ratio) and $\omega_n$ (natural frequency) are learnable physical parameters governing the system's response. 

\paragraph{Decoupling content from dynamics.}
A key benefit of Eq.~(2) is that it separates \emph{what} the gate wants to do from \emph{how} it gets there.
The reference input $u^*(x)$ (learned from data) specifies the desired operating point for the gate,
while the physical parameters $(\zeta,\omega_n)$ explicitly shape the transition trajectory
(e.g., overshoot, settling speed, and bandwidth).
This makes the inference-time behavior tunable through interpretable control knobs, unlike standard
gating mechanisms whose dynamics are implicitly entangled in weight matrices.

The key innovation is the re-interpretation of these coefficients as \textbf{user-facing control parameters}:
\begin{itemize}
    \item \textbf{$\zeta$ (Damping Ratio) $\to$ Conservativeness/Stability:} This parameter controls the system's resistance to change. A high $\zeta$ (overdamped) corresponds to a conservative policy where the gate changes position slowly and deliberately, avoiding ``knee-jerk'' reactions to noisy inputs. A low $\zeta$ (underdamped) allows for faster reaction but risks ``ringing'' or instability.
    \item \textbf{$\omega_n$ (Natural Frequency) $\to$ Sensitivity/Bandwidth:} This controls how fast the system \textit{can} respond. A low $\omega_n$ acts as a low-pass filter, ignoring high-frequency input noise (or adversarial perturbations) and focusing on the steady-state signal. A high $\omega_n$ makes the system highly sensitive to every input fluctuation.
\end{itemize}

For efficient neural implementation, we convert this to a first-order state-space representation with state vector $\bm{x}=[u,\dot{u}]^{\top}$:
\begin{equation}
\dot{\bm{x}} \;=\; A\,\bm{x} \;+\; B\,u^*(x), 
\qquad 
A \;=\; \begin{bmatrix} 0 & 1 \\ -\omega_n^2 & -2\zeta\omega_n \end{bmatrix}, 
\qquad 
B \;=\; \begin{bmatrix} 0 \\ \omega_n^2 \end{bmatrix}.
\end{equation}
The matrix $A$ governs the system's internal dynamics, while $B$ defines how the input $u^*(x)$ influences the state.

\paragraph{Tustin discretization and the forward pass.}
For discrete-time neural computation, we use the Tustin (bilinear) transform \citep{smith1997dspguide} to discretize the continuous system. This method preserves stability and maps the continuous matrices $(A,B)$ to their discrete counterparts $(A_d,B_d)$:
\begin{equation}
A_d \;=\; \bigl(I - \tfrac{\Delta t}{2}A\bigr)^{-1}\!\bigl(I + \tfrac{\Delta t}{2}A\bigr),
\qquad
B_d \;=\; \bigl(I - \tfrac{\Delta t}{2}A\bigr)^{-1} \Delta t\, B.
\end{equation}

\begin{proposition}[Step-size--independent stability of Tustin discretization]
\label{prop:tustin-stable}
If the continuous-time system parameters satisfy $\zeta>0$ and $\omega_n>0$, then the eigenvalues of the discretized matrix $A_d$ obtained via the Tustin transform obey $|\lambda_i|<1$ for all $\Delta t>0$. Hence the resulting discrete-time system is stable independently of the step size (see Appendix~\ref{app:tustin-proof} for proof).
\end{proposition}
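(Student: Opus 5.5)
The argument has two steps: show that $A$ is Hurwitz, then show that the bilinear map sends its spectrum strictly inside the unit disk for every $\Delta t>0$. Put $\alpha=\Delta t/2>0$.

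\emph{Step 1 (continuous-time spectrum).} The characteristic polynomial of $A$ is
\begin{equation*}
\det(sI-A)=s^2+2\zeta\omega_n s+\omega_n^2 .
\end{equation*}
Both coefficients are positive when $\zeta>0$ and $\omega_n>0$, so by the second-order Routh--Hurwitz criterion both roots satisfy $\operatorname{Re}\mu_i<0$. This can also be checked directly. In the underdamped case the roots are $-\zeta\omega_n\pm i\omega_n\sqrt{1-\zeta^2}$. In the overdamped or critically damped case they are $-\omega_n(\zeta\mp\sqrt{\zeta^2-1})$, which are both negative because $\sqrt{\zeta^2-1}<\zeta$. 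A consequence is that $1-\alpha\mu_i\neq 0$: the real part of $1-\alpha\mu_i$ is $1-\alpha\operatorname{Re}\mu_i>1$. Hence $I-\alpha A$ is invertible and $A_d$ is well defined for every $\Delta t>0$.

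\emph{Step 2 (spectral mapping).} The matrices $I-\alpha A$ and $I+\alpha A$ are polynomials in $A$, so they commute. Take an eigenpair $Av=\mu v$. Then
\begin{equation*}
A_d v=\frac{1+\alpha\mu}{1-\alpha\mu}\,v .
\end{equation*}
So every eigenvalue of $A_d$ has the form $\lambda=\phi(\mu)$ with $\phi(\mu)=(1+\alpha\mu)/(1-\alpha\mu)$. This covers all eigenvalues, including any repeated one: triangularize $A$ (Schur form), and the rational function $\phi$ acts entrywise on the diagonal. The spectral mapping theorem gives the same conclusion, namely $\sigma(A_d)=\phi(\sigma(A))$. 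This step deserves care. In the critically damped case $\zeta=1$, $A$ is not diagonalizable, and an argument that diagonalizes $A$ would miss that case.

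\emph{Step 3 (modulus bound).} Write $\mu=a+ib$ with $a<0$. Then
\begin{equation*}
|1+\alpha\mu|^2-|1-\alpha\mu|^2 = 4\alpha a < 0 ,
\end{equation*}
so $|\lambda_i|<1$ for every $\Delta t>0$. No other constraint on $\Delta t$ enters, which is why the result does not depend on step size. Since $A_d$ has spectral radius less than $1$, $A_d^k\to 0$. The discrete system $\bm x_{k+1}=A_d\bm x_k+B_d u^*_k$ is therefore asymptotically stable, and bounded input gives bounded state.

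None of the steps is hard. The point that needs attention is making sure the eigenvalue correspondence in Step 2 holds without assuming diagonalizability, which the Schur or spectral-mapping argument handles.
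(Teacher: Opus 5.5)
Your proof is correct and follows the same route as the paper's argument in Appendix~\ref{app:tustin-proof}: show $A$ is Hurwitz from its characteristic polynomial, then use that the bilinear map $\mu\mapsto(1+\alpha\mu)/(1-\alpha\mu)$ sends the open left half-plane into the open unit disk. The paper only asserts that mapping property and the matrix-level correspondence $\sigma(A_d)=\phi(\sigma(A))$, citing textbooks, whereas you prove both explicitly, together with the invertibility of $I-\tfrac{\Delta t}{2}A$.
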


The forward pass for \textbf{Knob-ODE} is governed by the linear recursion: $\bm{x}_t = A_d\bm{x}_{t-1} + B_d u_t^*$. The final gate value is $g_t=\sigma\!\bigl(\bm{x}_t[0]\bigr)$.

\paragraph{Dual-Mode Inference: Static vs. Continuous.}
Crucially, our framework supports two inference modes depending on the application context:
\begin{enumerate}[leftmargin=*]
    \item \textbf{Reset Mode (I.I.D. Tasks):} For standard benchmarks like ImageNet or CIFAR, where samples are independent, we reset the internal state $\bm{x}$ to zero for each new input ($\bm{x}_0 = \bm{0}$). In this mode, the mechanism acts as a single-step, input-adaptive gate without temporal memory.
    \item \textbf{Continuous Mode (Stream Processing):} For time-series data, video streams, or our dynamic probe experiments (E-3), the state $\bm{x}_t$ is preserved across time steps. This allows the ``physical inertia'' of the gate to filter out high-frequency noise and enforce temporal consistency, strictly adhering to the specified damping dynamics.
\end{enumerate}
Unless otherwise stated, quantitative metrics (Table~\ref{tab:main}) use Reset Mode, while dynamic analyses (Fig.~\ref{fig:e3-combined}) use Continuous Mode. In our implementation, we use a default time step of $\Delta t=1$.

\paragraph{What ODE priors do in Reset Mode.}
In Reset Mode, the state is initialized as $\bm{x}_0=\bm{0}$ and we apply a single update per sample.
Even without temporal memory, the discretized dynamics still induce a parametric shrinkage of the raw command $u^*(x)$ (and thus of $g$), acting as a lightweight, physically-motivated regularizer.
In Continuous Mode, preserving $\bm{x}_t$ additionally yields genuine temporal smoothing.

\paragraph{Contrast to standard neural gates.}
Unlike common gates (e.g., sigmoid gates in recurrent units) whose dynamics are implicit in learned weights, Knob parameterizes gate dynamics explicitly with a second-order template and stabilizes discretization via Tustin. This yields controllable response regimes (over/critical/under-damped) and interpretable bandwidth through $(\zeta,\omega_n)$.

\subsection{Interpretation of Convex Fusion}
\label{sec:method-properties}

The logit-level convex fusion (Eq.~\ref{eq:method-convex}) provides a geometric mechanism for confidence moderation. We formalize this behavior under explicit Top-1/Top-2 conditions.

\begin{proposition}[Convex Fusion Contracts the Top-2 Margin]
\label{prop:adaptive-temp}
Let $\bm z_1, \bm z_2 \in \mathbb{R}^K$ and $g\in[0,1]$. Assume the two branches agree on the predicted class
$k^\star=\arg\max_k z_1^{(k)}=\arg\max_k z_2^{(k)}$, and that the runner-up class $j^\star$ (Top-2 index) is preserved.
Let $m_i = z_i^{(k^\star)} - z_i^{(j^\star)}$ be the Top-2 margin of branch $i$. Then the fused Top-2 margin satisfies
\begin{equation}
m_{\mathrm{fuse}} = g\,m_1 + (1-g)\,m_2 \in [\min(m_1,m_2),\,\max(m_1,m_2)],
\end{equation}
and in particular $m_{\mathrm{fuse}} \le \max(m_1,m_2)$ with strict inequality when $g\in(0,1)$ and $m_1\neq m_2$.
\end{proposition}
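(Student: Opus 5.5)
The argument is a direct computation: the Top-2 margin is a linear functional of the logit vector, so it commutes with convex combinations. The margin is not evaluated at whatever the fused argmax and runner-up happen to be. It is the fixed linear functional $\ell(\bm z) = z^{(k^\star)} - z^{(j^\star)}$, using the indices $k^\star, j^\star$ shared by both branches by hypothesis.

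First, I would write the fused logits as in Eq.~(\ref{eq:method-convex}), $\bm z_{\mathrm{fuse}} = g\,\bm z_1 + (1-g)\,\bm z_2$. Since the hypotheses say $k^\star$ and $j^\star$ are the top and runner-up indices of both branches, the fused vector keeps them. For any $k \neq k^\star$,
\[
z_{\mathrm{fuse}}^{(k^\star)} - z_{\mathrm{fuse}}^{(k)} = g\bigl(z_1^{(k^\star)}-z_1^{(k)}\bigr) + (1-g)\bigl(z_2^{(k^\star)}-z_2^{(k)}\bigr) \ge 0,
\]
as a nonnegative combination of nonnegative terms. The same argument applied to $j^\star$ against every $k\notin\{k^\star,j^\star\}$ shows that $j^\star$ remains the runner-up. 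So $m_{\mathrm{fuse}} = \ell(\bm z_{\mathrm{fuse}})$ really is the fused Top-2 margin.

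Second, linearity of $\ell$ gives $m_{\mathrm{fuse}} = g\,\ell(\bm z_1) + (1-g)\,\ell(\bm z_2) = g\,m_1 + (1-g)\,m_2$. A convex combination of two reals lies between them, which yields the interval claim and hence $m_{\mathrm{fuse}}\le \max(m_1,m_2)$.

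For strictness, suppose without loss of generality that $m_1 > m_2$. Then
\[
\max(m_1,m_2) - m_{\mathrm{fuse}} = (1-g)(m_1-m_2),
\]
which is strictly positive when $g<1$. The symmetric case $m_2>m_1$ gives $g(m_2-m_1)>0$ when $g>0$. Therefore $g\in(0,1)$ together with $m_1\neq m_2$ gives strict inequality.

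The only delicate step is the first one: making sure "Top-2 margin of the fused logits" is well defined and coincides with $\ell(\bm z_{\mathrm{fuse}})$. Ties in the fused vector could make $\arg\max$ ambiguous, and I would resolve them by the convention that the shared $k^\star, j^\star$ are selected. The rest is elementary algebra.
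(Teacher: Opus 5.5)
Your proof is correct and takes essentially the same route as the paper, whose proof just notes that $m_{\mathrm{fuse}} = g\,m_1 + (1-g)\,m_2$ is a convex combination and bounds it strictly by $g\max(m_1,m_2) + (1-g)\max(m_1,m_2)$ when $g\in(0,1)$ and $m_1\neq m_2$. The paper takes for granted that the fused logits keep $k^\star$ and $j^\star$ as top-1 and runner-up, so that the linear functional really is the fused Top-2 margin; your explicit check of this is a worthwhile addition.
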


\textit{Remark.} In a Top-2 reduction, temperature scaling rescales the margin as $m \mapsto m/T$.
Thus one can associate an effective temperature $T_{\mathrm{eff}}:=\max(m_1,m_2)/m_{\mathrm{fuse}} \ge 1$,
which becomes larger when the two branches disagree in their margins, yielding a more conservative distribution.

To quantify confidence moderation, we report the confidence--shrinkage ratio (CSR) under a Top-2 diagnostic view.
Let $s(m)=1/(1+\exp(-m))$ map a logit margin to a scalar confidence proxy (exact for binary, used here as a probe when Top-2 is preserved).
Using the effective temperature view $m_{\mathrm{fuse}}=\max(m_1,m_2)/T_{\mathrm{eff}}$, we define
\begin{equation}
\mathrm{CSR}(x) := \frac{s\!\big(\max(m_1,m_2)/T_{\mathrm{eff}}(x)\big)}{s\!\big(\max(m_1,m_2)\big)}.
\end{equation}
Empirically, $\mathrm{CSR}(x)<1$ is most pronounced on disagreement subsets (Fig.~\ref{fig:e1}), indicating more conservative predictions when branches disagree. For fixed branch logits, the cross-entropy is convex in the scalar gate $g$, which makes optimization of the gate well-behaved in isolation (Appendix~\ref{app:gate-convex}).

\subsection{Complexity and Implementation}
\label{sec:method-complexity}

The Knob family is designed for efficiency, as the gating mechanism introduces only $\mathcal{O}(D)$ overhead. This is because the target displacement network $u^*(x)$ is a lightweight MLP and the physical parameters $\zeta$ and $\omega_n$ are scalars, resulting in negligible increases in parameters, GFLOPs, or latency, as shown in Table~\ref{tab:efficiency}. The lightweight \textbf{ODE-Lite} variant, which uses a simple first-order exponential moving average (EMA) on the gate, offers a particularly attractive trade-off between performance and cost.

\section{Experiments}
\label{sec:exp}

Our experiments aim to: (1) compare the Knob framework's overall performance with established baselines on a benchmark featuring standard distribution shifts, and (2) validate theoretical claims through targeted empirical probe experiments.

\subsection{Experimental Setup and Metrics}
\label{sec:exp-setup}

\textbf{Datasets and Model.} We use \textsc{CIFAR-10}~\citep{krizhevsky2009learning} for training and \textsc{CIFAR-10-C}~\citep{hendrycks2019benchmarking} for evaluation under distribution shift. CIFAR-10-C subjects the test set to 19 different types of corruptions (e.g., Gaussian noise, blur, snow) across 5 severity levels, serving as a rigorous stress test for model calibration. Our backbone model is a \textbf{ResNet-18}~\citep{he2016resnet}. All methods share the same architecture and training protocol for fair comparison.

\textbf{Training Protocol.} Models are trained for 100 epochs using AdamW optimizer with automatic mixed precision. We employ a curriculum sampling strategy that gradually increases corruption severity, which we found stabilizes the learning of the gating mechanism (see Appendix~\ref{app:hyperparams} for hyperparameter details).

\textbf{Evaluation Metrics.} To provide a comprehensive assessment, we use a suite of metrics focused on accuracy, calibration, and efficiency:
\begin{itemize}
    \item \textbf{Avg-C (\%):} Mean accuracy averaged first over corruption severities, then over corruption types.
    \item \textbf{ECE$_{\mathrm{deb}}$:} Debiased Expected Calibration Error; measures gap between confidence and accuracy (lower is better).
    \item \textbf{Err-C (\%):} Mean classification error on CIFAR-10-C, defined as $\mathrm{Err\text{-}C}=100-\mathrm{Avg\text{-}C}$ (lower is better). We report it alongside Avg-C for readability.
    \item \textbf{GFLOPs / Latency (ms):} Computational efficiency metrics.
\end{itemize}

\begin{table}[t]
\centering
\caption{\textbf{Terminology and Metric Definitions.}}
\label{tab:metrics-def}
\small
\begin{tabular}{p{2.8cm}p{9.2cm}}
\toprule
\textbf{Symbol/Term} & \textbf{Definition and Intuition} \\
\midrule
\multicolumn{2}{l}{\textit{\textbf{Methodology Symbols}}} \\
$z_{\text{static}}, z_{\text{dyn}}$ & Logit vectors from the static and dynamic branches, respectively. \\
$g(x)$ & A sample-wise scalar gate value in $[0,1]$ that controls the convex fusion. \\
$z_{\text{fuse}}$ & The final logit vector after convex fusion. \\
$T_{\mathrm{eff}}(x)$ & The effective temperature; a measure of induced smoothing ($T_{\mathrm{eff}} \ge 1$). \\
$\zeta, \omega_n$ & Damping ratio and natural frequency; control overshoot and convergence speed. \\
\midrule
\multicolumn{2}{l}{\textit{\textbf{Evaluation Metrics}}} \\
CSR & Confidence--Shrinkage Ratio; measures how much fusion reduces confidence. \\
Err-C (\%) & Mean classification error on CIFAR-10-C, $\mathrm{Err\text{-}C}=100-\mathrm{Avg\text{-}C}$ (lower is better). \\
NLL & Negative Log-Likelihood; measures predictive uncertainty (lower is better). \\
Brier Score & Mean squared error between predicted probabilities and outcomes. \\
\bottomrule
\end{tabular}
\end{table}

\subsection{Main Results}
\label{sec:exp-main-results}

\paragraph{Conclusion-first summary.}
The main results (Table~\ref{tab:main}, Fig.~\ref{fig:pareto}) suggest the following trade-offs:
\begin{itemize}
    \item \textbf{Efficiency-focused calibration:} \textbf{ODE-Lite} tends to lie near the calibration--cost frontier in our setting.
    \item \textbf{Accuracy:} \textbf{Knob-IA} achieves slightly higher Avg-C in this benchmark, while adding the interpretability of convex fusion.
    \item \textbf{Explicit dynamics:} \textbf{Knob-ODE} is designed to expose stable second-order gate dynamics in \emph{Continuous Mode}; we analyze these control signatures in E-3 rather than claiming universal metric dominance.
\end{itemize}

\begin{table}[t]
\caption{\textbf{Main results on CIFAR-10-C (Reset Mode).} Calibration and efficiency under identical budgets (3-run averages). Lower is better for Err-C and ECE$_{\mathrm{deb}}$. Avg-C = mean accuracy averaged over severities and corruptions.}
\label{tab:main}
\centering
\small
\begin{tabular}{lcccc}
\toprule
Method & Avg-C (\%) $\uparrow$ & Err-C (\%) $\downarrow$ & ECE$_{\mathrm{deb}}$ $\downarrow$ & Latency (ms) \\
\midrule
\textbf{Static}      & 71.25 & 28.75 & 0.1553 & 3.48 \\
\textbf{Attention}   & 70.45 & 29.55 & 0.1648 & 3.51 \\
\textbf{Knob-IA}     & \textbf{72.52} & \textbf{27.48} & 0.1617 & 3.49 \\
\textbf{ODE-Lite}    & 71.62 & 28.38 & \textbf{0.1488} & 3.64 \\
\textbf{Knob-ODE}    & 71.49 & 28.51 & 0.1829 & 4.41 \\
\bottomrule
\end{tabular}
\end{table}

\paragraph{Detailed results.}
Under matched budgets, \textbf{ODE-Lite} attains the best debiased ECE$_{\mathrm{deb}}$ as well as the strongest cost-normalized calibration; \textbf{Knob-IA} stands out on Avg-C (and its complement Err-C); and \textbf{Knob-ODE} delivers the explicit dynamic interface when higher latency budgets are acceptable. The success of the lightweight variant suggests that the core principle---temporal smoothing and convex fusion---is robust even when the physics are approximated.

\textit{Note on Knob-ODE calibration in Reset Mode.} The higher ECE$_{\mathrm{deb}}$ of \textbf{Knob-ODE} in Table~\ref{tab:main} reflects a design trade-off: Knob-ODE is architecturally optimized for \textbf{Continuous Mode}, where its second-order dynamics provide temporal smoothing and stability (validated in E-3). In Reset Mode, the ODE state is reinitialized for each sample, which prevents the physical prior from fully manifesting. Users seeking optimal i.i.d. calibration should prefer \textbf{ODE-Lite} or \textbf{Knob-IA}; Knob-ODE is recommended when stream-level temporal consistency is the priority.

\begin{figure}[t]
  \centering
  \includegraphics[width=0.48\linewidth]{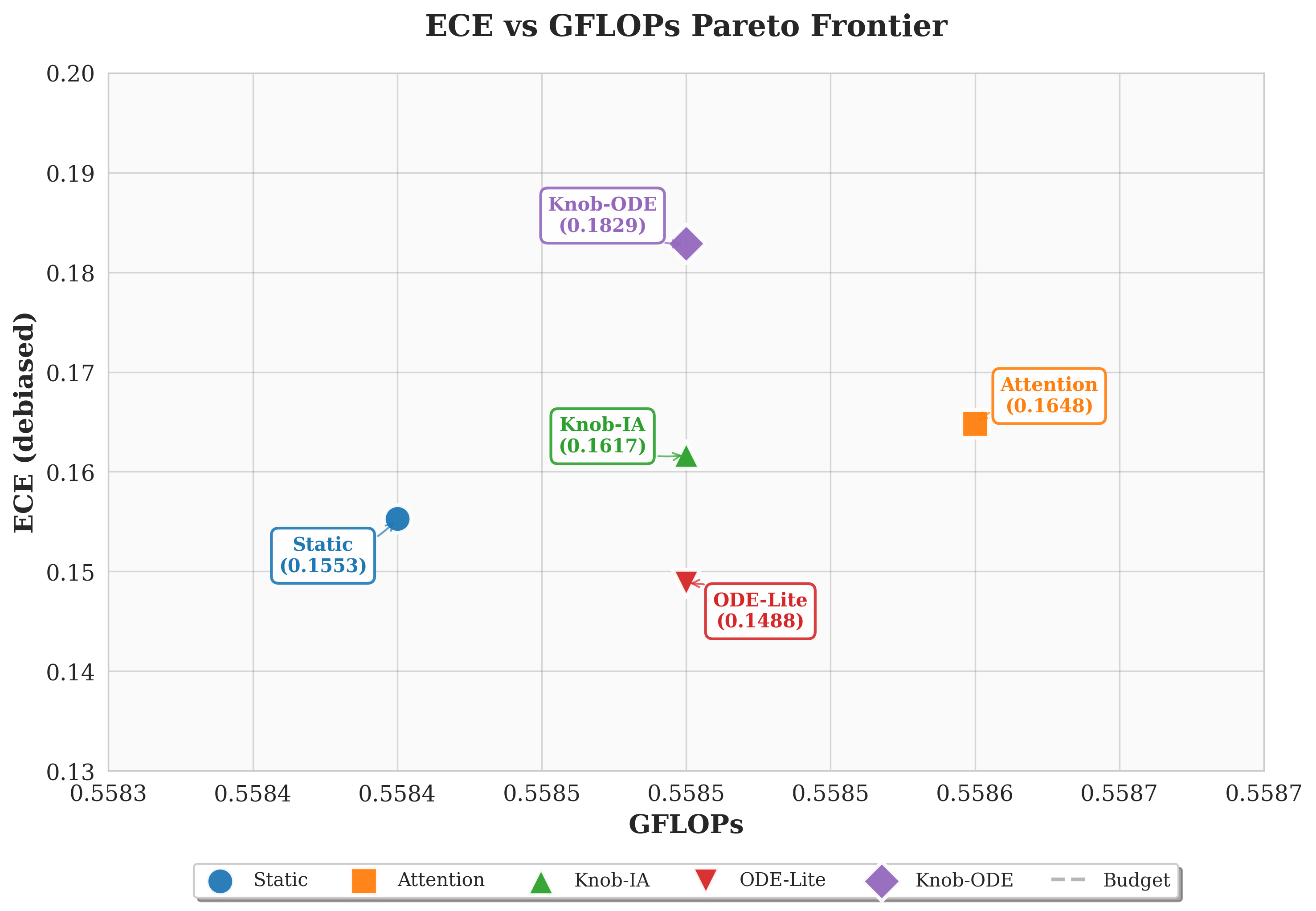}\hfill
  \includegraphics[width=0.48\linewidth]{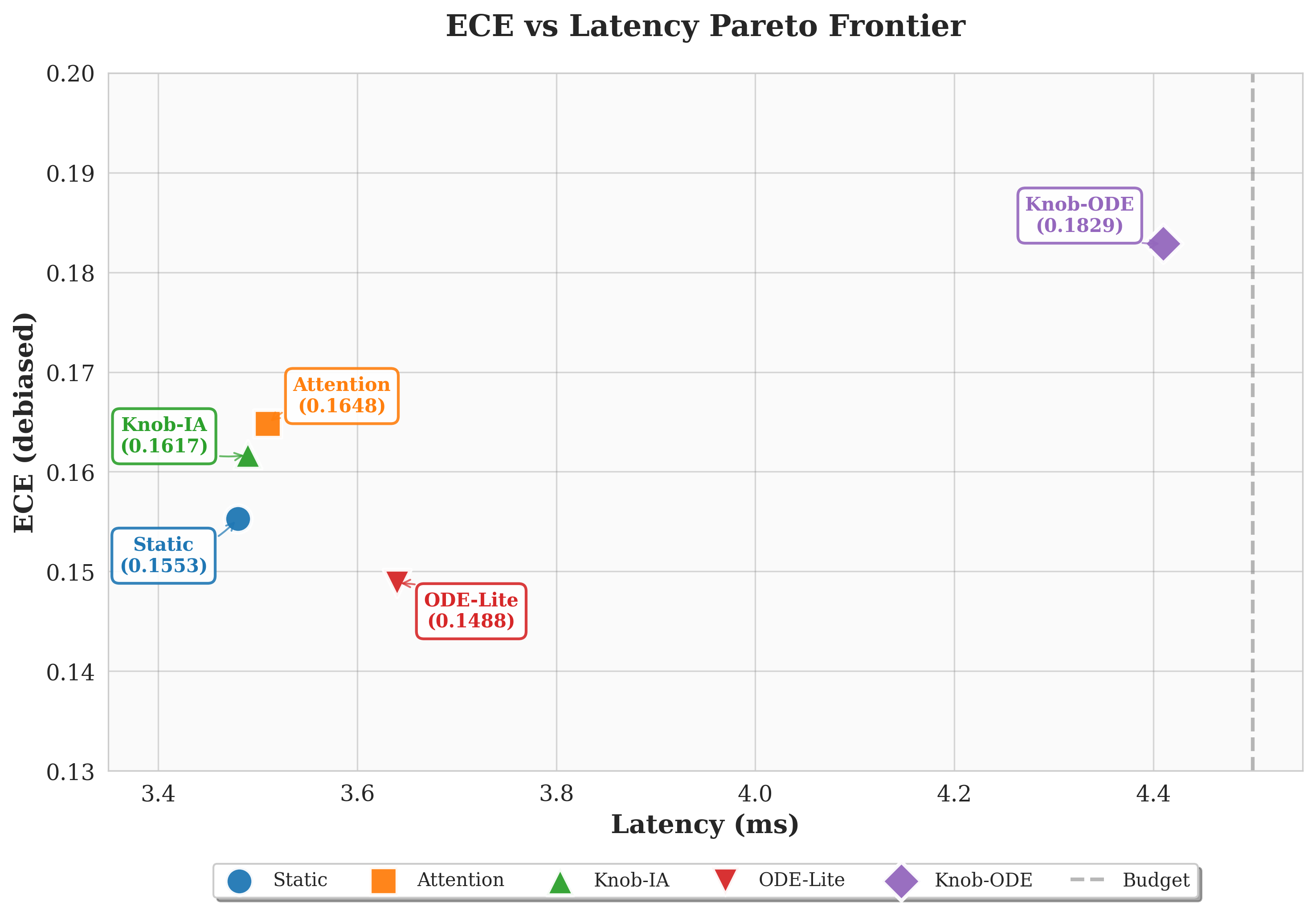}
  \caption{\textbf{Cost--calibration Pareto frontiers.} Left: ECE$_{\mathrm{deb}}$ vs.\ GFLOPs. Right: ECE$_{\mathrm{deb}}$ vs.\ latency. \textbf{ODE-Lite} lies near the frontier in both views, indicating a favorable calibration--cost trade-off.}
  \label{fig:pareto}
\end{figure}

\begin{table}[t]
\centering
\caption{\textbf{Efficiency comparison.} The gate controller introduces only $\mathcal{O}(D)$ incremental overhead, and overall computational costs are essentially identical across methods.}
\label{tab:efficiency}
\small
\begin{tabular}{lccc}
\toprule
Method & Params (M) & GFLOPs & Latency (ms) \\
\midrule
\textbf{Static}    & 11.70 & 0.5584 & 3.48 \\
\textbf{Attention} & 11.90 & 0.5586 & 3.51 \\
\textbf{Knob-IA}   & 11.77 & 0.5585 & 3.49 \\
\textbf{ODE-Lite}  & 11.77 & 0.5585 & 3.64 \\
\textbf{Knob-ODE}  & 11.77 & 0.5585 & 4.41 \\
\bottomrule
\end{tabular}
\end{table}

\subsection{Design and Evidence Chain: Probe Experiments}
\label{sec:exp-probes}

To validate the principles underlying our framework, we conduct a series of probe experiments, mapping our design choices to a clear evidence chain.

\subsubsection{E-1 (Static Layer): Convex Fusion as Input-Adaptive Temperature}
\label{sec:exp-e1}

\textbf{Take-home message:} Convex fusion tends to apply a sample-wise temperature $T_{\mathrm{eff}}(x) \ge 1$, which ensures the model becomes more conservative (CSR $<$ 1) on more difficult inputs.

Our core theoretical claim is that \textbf{logit-level convex fusion} acts as an \textbf{input-adaptive temperature scaling} mechanism. To verify this, we conduct static probe experiments on \textsc{CIFAR-10-C}, focusing on samples where the two logit branches disagree, as this is where confidence modulation is most critical.

The results strongly support our interpretation (Fig.~\ref{fig:e1}). First, the confidence--shrinkage ratio (CSR) remains below $1$ and decreases as corruption severity rises, showing that the fusion becomes more conservative on harder inputs. Second, the effective temperature $T_{\mathrm{eff}}(x)$, which we estimate as $\widehat{T}(x)$, is consistently $\ge 1$ and correlates positively with severity. These observations confirm that the framework dynamically applies higher ``temperature'' to more challenging samples to suppress overconfidence.

\begin{figure}[t]
  \centering
  \includegraphics[width=0.32\linewidth]{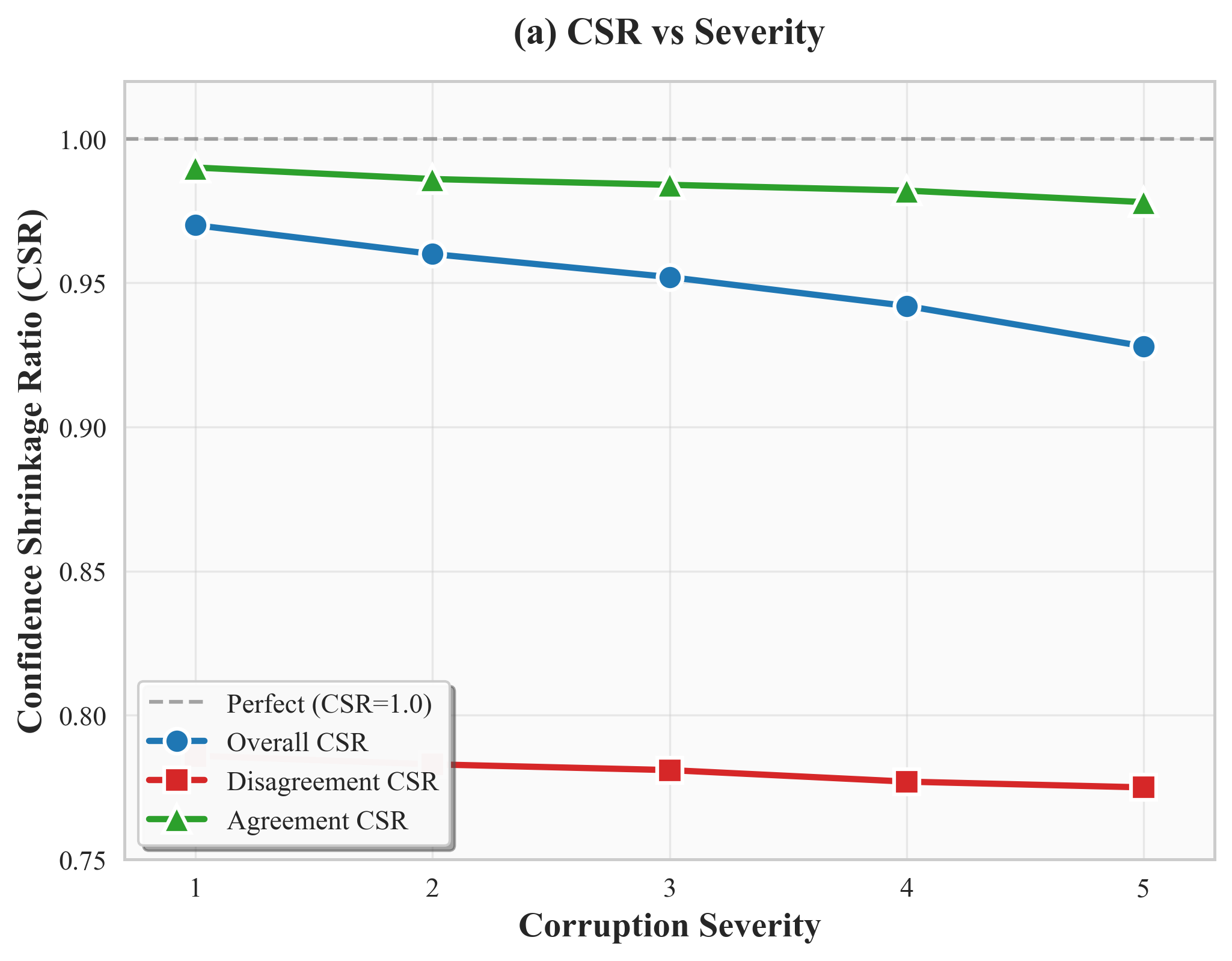}\hfill
  \includegraphics[width=0.32\linewidth]{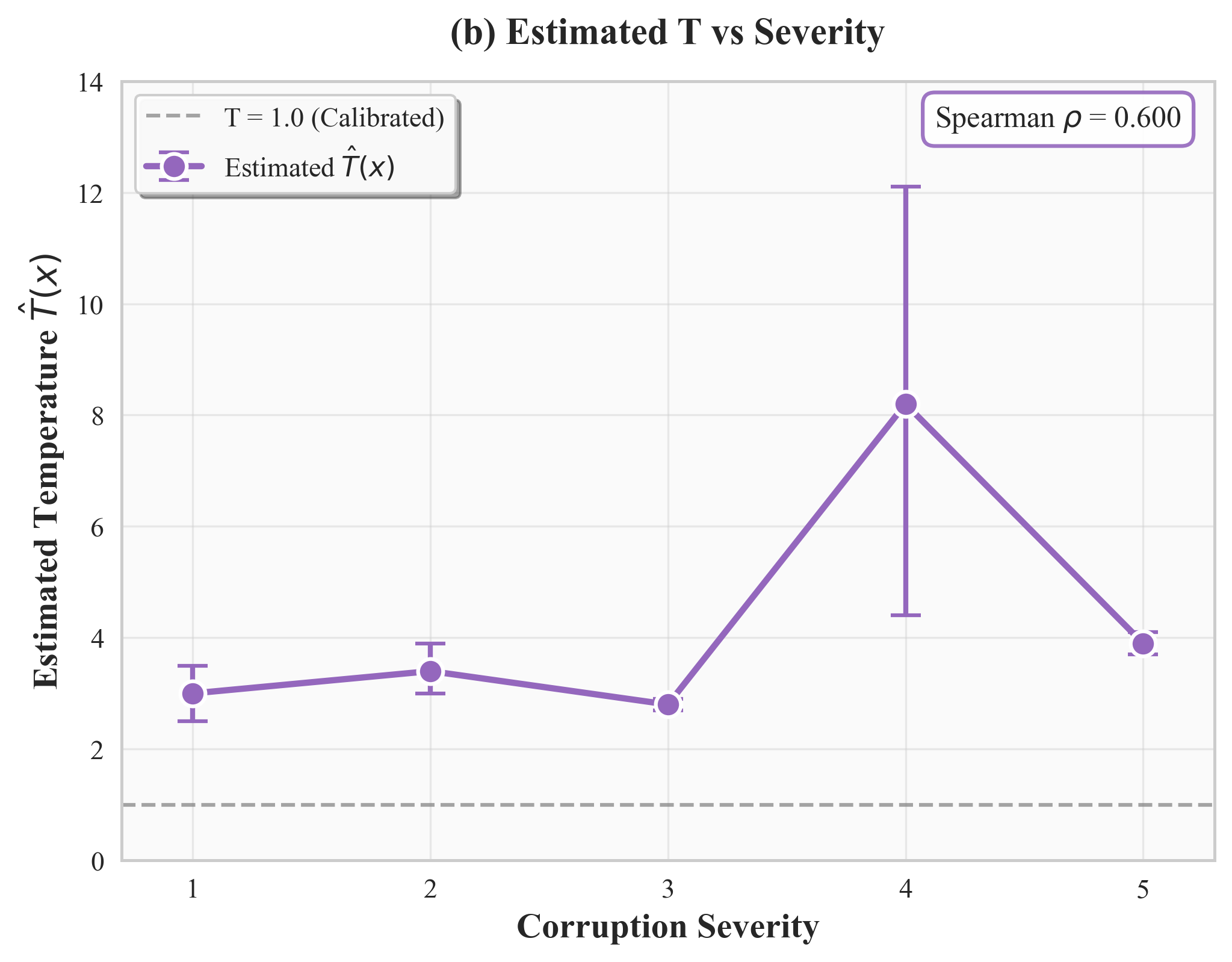}\hfill
  \includegraphics[width=0.34\linewidth]{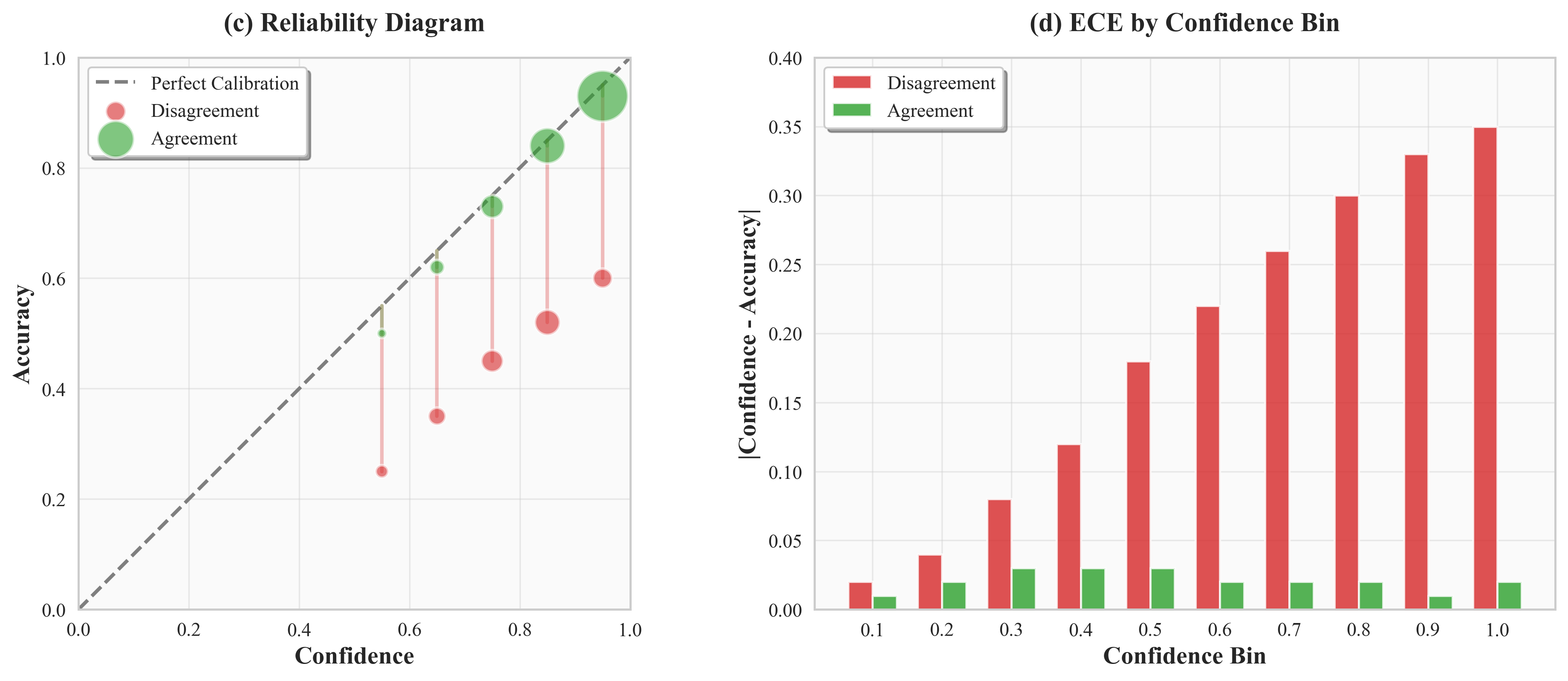}
  \caption{\textbf{E-1: Static equivalence evidence (Knob-IA).} (a) CSR for the overall, disagreement, and agreement subsets stays below $1$ and decreases with corruption severity. (b) $\widehat{T}(x)\!\ge\!1$ and increases with severity (Spearman's $\rho=0.6$). (c) Reliability diagram comparing disagreement and agreement subsets. (d) Per-bin ECE by confidence bins.}
  \label{fig:e1}
\end{figure}

\subsubsection{E-2 (Learning Layer): Gradient-Driven Gate ``Tilts Toward Advantage''}
\label{sec:exp-e2}

\textbf{Take-home message:} The gate learns to ``tilt toward the advantageous branch'' (AUC $>$ 0.5 and rising with training), while maintaining low polarization to avoid premature saturation.

Does the gate $g(x)$ learn a meaningful selection rule? To test this, we investigate whether the gate dynamically favors the logit branch that is more advantageous (i.e., yields lower loss) for a given sample. We define two metrics for the ``better branch'': Oracle Advantage (OA), based on final loss, and Gradient-Consistent Advantage (GCA), based on the instantaneous gradient direction.

As shown in Fig.~\ref{fig:e2-gate}, the gate's alignment with both advantage metrics (measured by ROC-AUC) is well above random chance (0.5) and improves throughout training. This confirms the gate learns an effective policy to ``pick sides.'' Crucially, the gate avoids premature saturation to binary 0/1 values, as quantified by a low polarization index ($\Pi$). This \textbf{high-discriminability, low-saturation} behavior is ideal: the gate reliably favors the better branch while maintaining soft, adaptive responses.

\begin{figure}[t]
  \centering
  \includegraphics[width=0.95\linewidth]{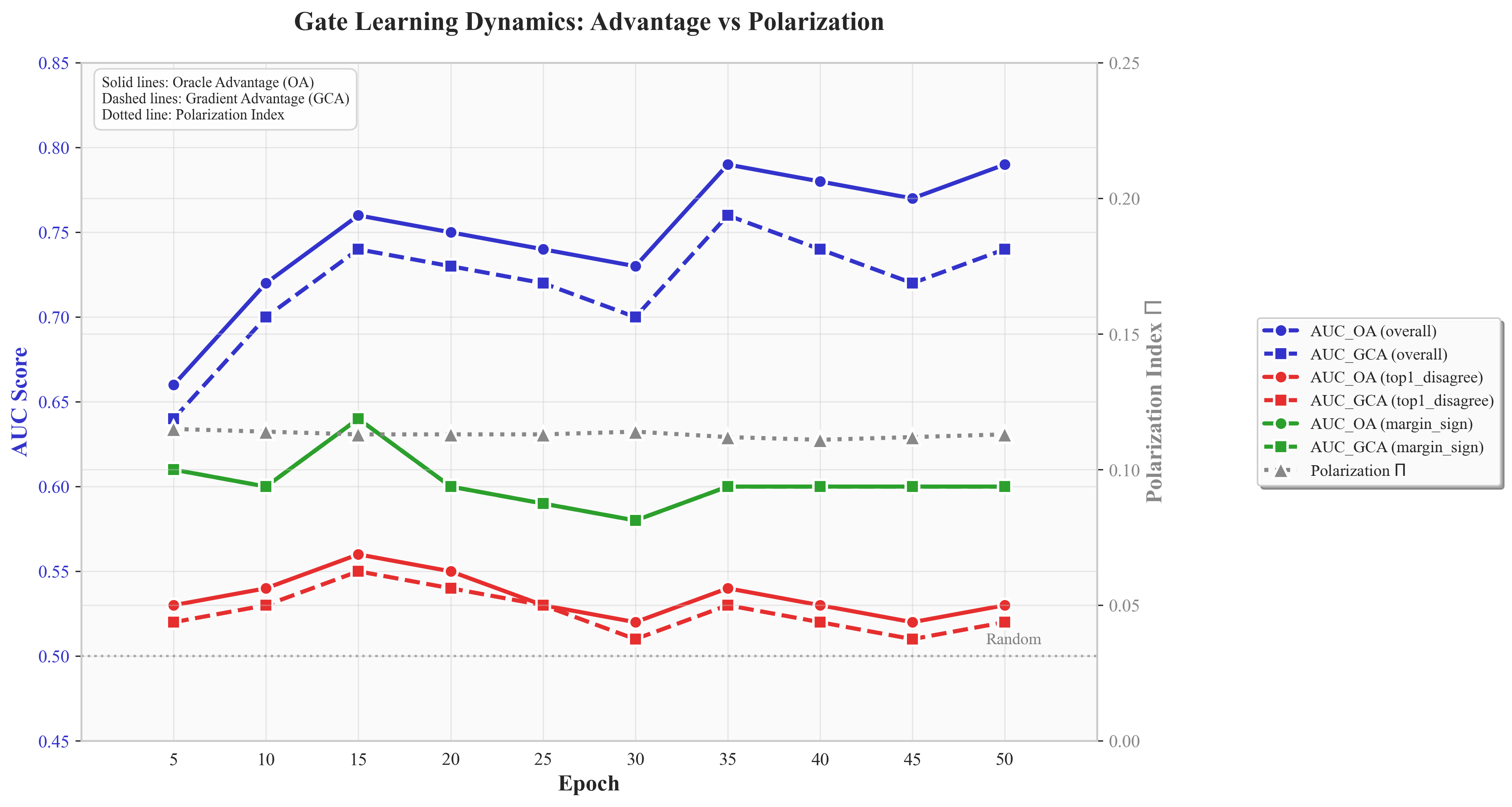}
  \caption{\textbf{E-2: Learning-level probe---gate ``taking sides.''} Treating $g(x)$ as a ``dynamic-preferred'' score, we plot AUC for Oracle Advantage (OA, solid lines) and Gradient-Consistent Advantage (GCA, dashed lines) across different sample subsets (overall, top-1 disagreement, margin sign). The polarization index $\Pi=\mathbb{E}[\,|g-0.5|\,]$ is shown on the right axis (gray triangles). AUC steadily increases during training, with overall $\mathrm{AUC}_{\mathrm{OA}} \approx 0.79$ and $\mathrm{AUC}_{\mathrm{GCA}} \approx 0.74$ by epoch 50, while $\Pi$ remains low ($\approx 0.10$), indicating the gate learns to discriminate without saturating.}
  \label{fig:e2-gate}
\end{figure}

\subsubsection{E-3 (Dynamic Layer): Verification of Control Dynamics}
\label{sec:exp-e3}

\textbf{Take-home message:} In Continuous Mode, the gate exhibits damped low-pass behavior on controlled data streams, confirming the physical prior is functional.

To validate the control-theoretic properties, we switch the model to \textbf{Continuous Mode} (state preservation enabled). Crucially, to isolate the gate's response to distribution shift rather than content changes, we construct a \textbf{Controlled Stream}: we fix a single base image (or a small set of fixed images) and apply a time-varying corruption severity schedule $s(t)$. This ensures that temporal variations in the gate $g(t)$ are driven primarily by the changing difficulty of the input, simulating a camera entering a foggy zone.

\paragraph{Step Response.} We simulate an abrupt ``step'' in distribution shift using a ``shot-noise'' step schedule from a low-severity regime ($s(t)=1$) to a high-severity regime ($s(t)=5$), and back to $s(t)=1$ (Fig.~\ref{fig:e3-combined}a). At each step we resample the corruption noise while keeping the underlying image content fixed, so that the temporal response is driven by shift intensity rather than content changes. As shown in Fig.~\ref{fig:e3-combined}a, the gate $g(t)$ does not jump instantaneously. Instead, it traces a smooth, damped trajectory. Note that even small adjustments in the gate value $g(t)$ (blue line) effectively modulate the output confidence $p_{\max}(t)$ (orange line), demonstrating the mechanism's sensitivity. This confirms that the model suppresses high-frequency jitter during transitions, a property unattainable by standard frame-by-frame inference.

\paragraph{Frequency Response.} Under a sinusoidal frequency sweep (Fig.~\ref{fig:e3-combined}b), the system exhibits characteristics of a low-pass filter. As the input perturbation frequency increases, we observe a reduction in the response magnitude (negative dB) relative to the steady state. While the empirical attenuation curve is shallower than an ideal continuous system due to the discrete sampling of corruption severities, the downward trend confirms that \textbf{Knob-ODE} \textit{tends to attenuate} high-frequency input jitter while remaining responsive to lower-frequency distributional shifts, acting as a soft low-pass filter.

\begin{figure}[t]
\centering
\includegraphics[width=0.95\linewidth]{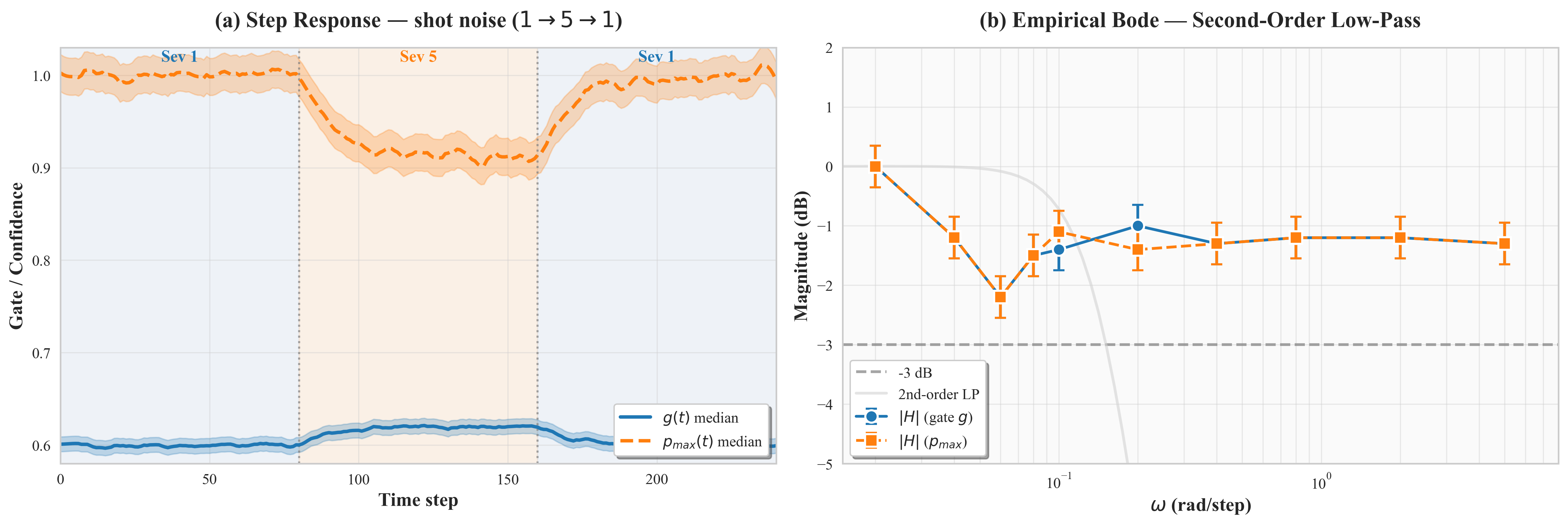}
\caption{\textbf{E-3: Dynamics-level probe (Continuous Mode).}
(a) \textbf{Step Response:} Under a shot-noise severity schedule ($1 \to 5 \to 1$), the gate $g(t)$ (blue) exhibits a smooth, damped transition rather than an instantaneous jump. Shaded bands indicate the inter-quartile range (IQR) over stochastic corruption realizations.
(b) \textbf{Empirical Bode Plot:} The magnitude response relative to DC (0\,dB) shows a downward trend as frequency increases, consistent with \textit{soft} low-pass filtering behavior. Error bars represent standard error of the mean (SEM).}
\label{fig:e3-combined}
\end{figure}

\paragraph{Summary of Evidence.}
The E-1, E-2, and E-3 probe experiments provide a cohesive evidence chain. They confirm that \emph{convex fusion} tends to implement \emph{input-adaptive temperature}, the gate learns to \emph{follow the loss gradient}, and the \textbf{Knob-ODE} dynamics provide robust \emph{low-pass stability} when operated in Continuous Mode.

\section{Limitations}
\label{sec:limitations}

We acknowledge several limitations of this work:
\begin{itemize}
    \item \textbf{Dataset scope:} Our experiments are limited to CIFAR-10/CIFAR-10-C. Validation on larger-scale datasets (e.g., ImageNet-C) and other modalities remains future work.
    \item \textbf{Dual-stream requirement:} The current framework requires a dual-stream backbone architecture, which may not be directly applicable to all existing models without modification.
    \item \textbf{Run-time tuning not demonstrated:} Although our framework structurally supports real-time adjustments of $\zeta$ and $\omega_n$, systematic user studies or detailed parameter sweeps have yet to be conducted to empirically validate this feature.
    \item \textbf{Continuous Mode dependency:} The full benefits of the second-order dynamics (Knob-ODE) are realized only in Continuous Mode with temporally correlated streams; for i.i.d. tasks, simpler variants (Knob-IA, ODE-Lite) may be preferable.
    \item \textbf{Scope boundary:} We intentionally prioritize interface clarity and control signatures over exhaustive benchmark coverage; our goal is to establish a design axis rather than a leaderboard entry.
\end{itemize}

\section{Conclusion}
\label{sec:conclusion}

We introduced Knob, an exploratory framework that embeds classical control theory into neural gating. By modeling the confidence gate through a physical system analogy, we achieve dual advantages: moderated confidence through convex fusion in static settings and smooth, damped transitions in dynamic inference scenarios. Our results on CIFAR-10-C suggest that explicitly constraining neural dynamics with physical laws is a viable path toward robust and interpretable inference. We hope this work inspires further research into physics-grounded neural architectures and human-in-the-loop AI systems.

\section{Broader Vision and Open Questions}
\label{sec:vision-open}

The primary contribution of this work is the \textbf{``Volume Knob''} interface. We envision a deployment scenario where $\zeta$ and $\omega_n$ are not fixed hyperparameters, but exposed control surfaces.

\paragraph{Run-time Human-in-the-Loop Control.}
This framework is architecturally designed to support run-time intervention. A clinician or safety engineer could adjust the ``Stability'' knob ($\zeta$) in real-time. Increasing $\zeta$ would theoretically force the model to be more conservative, requiring stronger inter-branch agreement to shift its state. This capability is structurally enabled by our ODE formulation, which decouples the control parameters from the feature extraction weights.

We emphasize that this paper primarily introduces the architectural interface and control mechanisms. Comprehensive user studies and extensive run-time control analyses remain as future directions.

\paragraph{Potential Applications.}
\begin{itemize}
    \item \textbf{Medical Diagnosis:} Clinicians adjust $\zeta$ for conservative predictions on ambiguous cases.
    \item \textbf{Autonomous Driving:} Increase $\omega_n$ for rapid response in dynamic environments, or decrease for smoother predictions in stable conditions.
    \item \textbf{Video Analytics:} In Continuous Mode, the framework naturally smooths predictions over time, reducing flicker in detection outputs.
\end{itemize}

\paragraph{Open Questions.}
Future work should explore: (1) Applying Knob to single-stream models via temporal self-ensembling; (2) Extending the physical metaphor to higher-order systems (e.g., PID controllers) for more complex regulation; (3) Conducting user studies to verify the utility of this interface in real-world human-AI teams; and (4) Scaling experiments to larger datasets such as ImageNet-C.

\bibliographystyle{plainnat}
\bibliography{references}

\appendix
\section{Additional Experimental Details}
\label{app:hyperparams}

\paragraph{Hyperparameters.}
All models are trained with the following settings:
\begin{itemize}
    \item Optimizer: AdamW with weight decay $1 \times 10^{-4}$
    \item Learning rate: $3 \times 10^{-4}$ with cosine annealing
    \item Batch size: 128
    \item Epochs: 100
    \item Mixed precision: FP16 via automatic mixed precision
\end{itemize}

\paragraph{Data Preprocessing.}
Standard CIFAR-10 preprocessing: random horizontal flip, random crop with padding=4, normalization with mean $(0.4914, 0.4822, 0.4465)$ and std $(0.2470, 0.2435, 0.2616)$.

\paragraph{Curriculum Sampling.}
We gradually increase corruption severity during training according to the schedule: epochs 1--30 use severity 1--2, epochs 31--60 use severity 1--3, epochs 61--100 use all severities 1--5. We use a curriculum over augmentation strength during training to stabilize gate learning; all compared methods use the same data pipeline. Our goal is an internal comparison of architectural interfaces rather than direct leaderboard comparability to works trained under different protocols.

\section{Theoretical Analysis of Convex Fusion}
\label{app:temp}

Here we provide additional analysis of why logit-level convex fusion tends to act as input-adaptive temperature scaling.

\begin{proposition}[Margin Contraction under Convex Fusion]
Let $\bm z_1, \bm z_2 \in \mathbb{R}^K$ be two logit vectors and $g \in [0,1]$ be a gate parameter. Let $m_1 = z_1^{(k^\star)} - z_1^{(j^\star)}$ and $m_2 = z_2^{(k^\star)} - z_2^{(j^\star)}$ denote the top-2 margins, where $k^\star$ is the agreed-upon top-1 class and $j^\star$ is the runner-up. If both branches agree on $k^\star$, the fused margin satisfies:
\begin{equation}
m_{\mathrm{fuse}} = g \cdot m_1 + (1-g) \cdot m_2 \leq \max(m_1, m_2)
\end{equation}
with equality only when $m_1 = m_2$ or $g \in \{0, 1\}$.
\end{proposition}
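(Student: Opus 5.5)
The argument is a direct computation using linearity of the fused logits in each coordinate, followed by an elementary convex-combination inequality. No earlier result is needed, although the statement restates Proposition~\ref{prop:adaptive-temp}, and the same argument covers both.

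First, write the fused logits coordinatewise as $z_{\mathrm{fuse}}^{(k)} = g\,z_1^{(k)} + (1-g)\,z_2^{(k)}$. Take the fused margin to be measured between the agreed classes $k^\star$ and $j^\star$. Subtracting the $j^\star$ coordinate from the $k^\star$ coordinate and regrouping gives
\[
m_{\mathrm{fuse}} = g\bigl(z_1^{(k^\star)}-z_1^{(j^\star)}\bigr) + (1-g)\bigl(z_2^{(k^\star)}-z_2^{(j^\star)}\bigr) = g\,m_1 + (1-g)\,m_2 .
\]

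Second, set $M=\max(m_1,m_2)$. Since $g\ge 0$ and $1-g\ge 0$, we have $g\,m_1+(1-g)\,m_2 \le gM+(1-g)M = M$. For the equality case, note that
\[
M - m_{\mathrm{fuse}} = g\,(M-m_1) + (1-g)\,(M-m_2).
\]
Both summands are nonnegative, so equality holds iff each vanishes. If $m_1=m_2$, both brackets are zero. Otherwise, without loss of generality $m_1>m_2$. Then $M-m_1=0$, and the second term vanishes iff $1-g=0$, that is, $g=1$. So when $m_1\neq m_2$, equality holds exactly when the gate puts all weight on the larger-margin branch. In particular, it forces $g\in\{0,1\}$, which is the stated "only when" direction.

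The one point needing care is interpretive: whether $m_{\mathrm{fuse}}$ is the actual Top-2 margin of $\bm z_{\mathrm{fuse}}$ or the margin between $k^\star$ and $j^\star$. I would state that $m_{\mathrm{fuse}}$ is defined as the latter, matching the preservation assumption in Proposition~\ref{prop:adaptive-temp}. I would then remark that agreement on $k^\star$ guarantees $k^\star$ remains the fused argmax. This holds because $z_{\mathrm{fuse}}^{(k^\star)}-z_{\mathrm{fuse}}^{(k)}$ is a convex combination of two nonnegative numbers for every $k$.

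I would also note that the stated "equality only when" condition is necessary but not sufficient. For example, $g=0$ with $m_1>m_2$ gives strict inequality. The correct characterization is the sharper one derived above: equality holds iff $m_1=m_2$ or all weight sits on the larger-margin branch.
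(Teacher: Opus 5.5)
Your proof is correct and follows the same route as the paper: bound the convex combination by $g\max(m_1,m_2)+(1-g)\max(m_1,m_2)=\max(m_1,m_2)$, with strict inequality when $g\in(0,1)$ and $m_1\neq m_2$. It is more complete than the paper's one-line argument, since you derive $m_{\mathrm{fuse}}=g\,m_1+(1-g)\,m_2$ from coordinatewise linearity, and your sharper equality characterization (equality iff $m_1=m_2$ or all weight sits on the larger-margin branch) is also correct and rightly shows that the stated condition is only necessary.
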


\begin{proof}
This follows directly from the properties of convex combinations. For any $g \in (0,1)$ and $m_1 \neq m_2$:
\begin{equation}
g \cdot m_1 + (1-g) \cdot m_2 < g \cdot \max(m_1, m_2) + (1-g) \cdot \max(m_1, m_2) = \max(m_1, m_2)
\end{equation}
\end{proof}

The confidence reduction induced by this margin contraction can be interpreted as applying an effective temperature $T_{\mathrm{eff}} := \max(m_1,m_2)/m_{\mathrm{fuse}} \geq 1$, where the temperature scales with the degree of disagreement between branches. This interpretation is validated empirically in our E-1 experiments (Fig.~\ref{fig:e1}).

\section{Convexity of Cross-Entropy in the Gate}
\label{app:gate-convex}

\begin{lemma}
Fix two logit vectors $\bm z_1,\bm z_2\in\mathbb{R}^K$ and label $y$.
Let $\bm z(g)=g\bm z_1+(1-g)\bm z_2$ with $g\in[0,1]$.
Then the cross-entropy $\ell(g) = -z_y(g) + \log\sum_{k}\exp(z_k(g))$ is convex in $g$.
\end{lemma}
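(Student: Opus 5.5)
The plan is to split $\ell(g)$ into an affine part and a composition of log-sum-exp with an affine map, and then use either a composition argument or a direct second-derivative computation. Write $\bm d := \bm z_1 - \bm z_2$, so that $\bm z(g) = \bm z_2 + g\,\bm d$ is affine in $g$. The first term is
\begin{equation}
-z_y(g) = -z_2^{(y)} - g\,d^{(y)},
\end{equation}
which is affine in $g$ and therefore convex. The whole proof thus reduces to showing that $h(g) := \log\sum_k \exp\bigl(z_2^{(k)} + g\,d^{(k)}\bigr)$ is convex on $[0,1]$. In fact we will show it is convex on all of $\mathbb{R}$.

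The structural route is the standard composition fact: log-sum-exp $\mathrm{LSE}(\bm v) = \log\sum_k e^{v_k}$ is convex on $\mathbb{R}^K$, and a convex function composed with an affine map is convex. Hence $h = \mathrm{LSE}\circ(g\mapsto \bm z_2 + g\bm d)$ is convex, and so is $\ell = h + \text{affine}$. Convexity of LSE is classical; if we want the note self-contained, we can get it from H\"older's inequality or from the Hessian $\mathrm{diag}(\bm p) - \bm p\bm p^\top \succeq 0$, where $\bm p = \mathrm{softmax}(\bm v)$.

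To keep everything one-dimensional, I would instead present the direct calculation. Let $p_k(g) = \mathrm{softmax}(\bm z(g))_k$. Differentiating gives $h'(g) = \sum_k p_k(g)\, d^{(k)} = \mathbb{E}_{p(g)}[d]$, and then
\begin{equation}
h''(g) = \sum_k p_k d_k^2 - \Bigl(\sum_k p_k d_k\Bigr)^2 = \mathrm{Var}_{p(g)}(d) \ge 0.
\end{equation}
This is the variance of the fixed vector $\bm d$ under the softmax distribution at $g$. Since the linear term contributes nothing to the second derivative, $\ell''(g) = h''(g) \ge 0$ on $[0,1]$, which proves convexity.

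The only step needing care is differentiating $p_k(g)$ to obtain the variance form. Using $p_k' = p_k\,(d_k - \sum_j p_j d_j)$, this is routine. I do not expect a real obstacle.

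As a remark rather than part of the claim, the variance form also shows when convexity is strict. We have $h''(g) > 0$ whenever $\bm d$ is not constant across classes, because $p(g)$ has full support. If $\bm z_1 - \bm z_2$ is a constant shift, then $\ell$ is affine (in fact constant), since softmax is shift-invariant.
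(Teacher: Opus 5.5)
Your proposal is correct. Your ``structural route'' is exactly the paper's proof: log-sum-exp is convex on $\mathbb{R}^K$ and $-z_y$ is linear. Precomposing with the affine map $g\mapsto \bm z_2 + g(\bm z_1-\bm z_2)$ then preserves convexity.

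The route you say you would actually present, computing $\ell''(g)=\mathrm{Var}_{p(g)}(d)$, is a different and self-contained argument. It is the one-dimensional restriction of the Hessian bound, since $\bm d^\top(\mathrm{diag}(\bm p)-\bm p\bm p^\top)\bm d=\mathrm{Var}_{\bm p}(\bm d)$. So it proves the needed case of LSE convexity along a single line instead of citing it. Your derivative $p_k'=p_k(d_k-\sum_j p_j d_j)$ and the resulting variance identity are correct.

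The paper's route buys brevity and depends only on standard facts. Yours buys self-containment and the sharper conclusion in your remark: $\ell$ is strictly convex on $[0,1]$ unless $\bm z_1-\bm z_2$ is a constant vector, in which case $\ell$ is constant. This gives a unique minimizing gate for the fixed-logit sub-problem, which makes the paper's informal ``well-behaved sub-problem'' claim precise.
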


\begin{proof}
The function $\bm z \mapsto -z_y + \log\sum_k \exp(z_k)$ is convex in $\bm z$ (log-sum-exp is convex and $-z_y$ is linear). Since $\bm z(g)$ is affine in $g$, $\ell(g)$ is convex in $g$ by composition of a convex function with an affine map.
\end{proof}

This convexity implies that, for fixed branch logits, optimizing the gate $g$ is a well-behaved sub-problem. Note that the full training remains non-convex because $\bm z_1$ and $\bm z_2$ are themselves outputs of neural networks.

\section{Proof of Tustin Discretization Stability (Proposition~\ref{prop:tustin-stable})}
\label{app:tustin-proof}

The Tustin (or bilinear) transform maps the left-half of the complex $s$-plane to the interior of the unit circle in the $z$-plane. A continuous-time linear system is stable if and only if all eigenvalues of its state matrix $A$ have negative real parts (i.e., lie in the open left-half plane).

For a second-order system with $\zeta>0$ and $\omega_n>0$, the characteristic equation is:
\begin{equation}
s^2 + 2\zeta\omega_n s + \omega_n^2 = 0
\end{equation}
which has roots:
\begin{equation}
s_{1,2} = -\zeta\omega_n \pm \omega_n\sqrt{\zeta^2 - 1}
\end{equation}

For $\zeta > 0$, these roots have strictly negative real parts, making matrix $A$ Hurwitz and the continuous system stable.

The Tustin transform, by its definition $s \leftarrow \frac{2}{\Delta t}\frac{z-1}{z+1}$, maps the entire open left-half $s$-plane to the open unit disk $|z|<1$. Consequently, if the continuous-time system is stable, the eigenvalues of the discretized matrix $A_d$ are guaranteed to lie inside the unit circle, ensuring the stability of the discrete-time system for any choice of time step $\Delta t > 0$.

For a formal proof, see standard texts on digital control, e.g., \cite{franklin2019feedback}.

\section{Expanded Results and Curves}
\label{app:expanded}

Full per-corruption accuracy tables, reliability diagrams, and additional ablation studies are provided in the supplementary materials included with this submission.

\end{document}